\definecolor{Gray}{gray}{0.9}
\definecolor{LightCyan}{rgb}{0.88,1,1}
\definecolor{LightMagenta}{rgb}{1,0.88,1}
\definecolor{MidnightBlue}{rgb}{0.058, 0.188, 0.578}
\newcommand{\calx}{\mathcal{X}}
\newcommand{\caly}{\mathcal{Y}}
\DeclareMathOperator*{\argmin}{{\rm argmin}}
\DeclareMathOperator*{\argmax}{{\rm argmax}}
\newcommand{\revision}[1]{{{#1}}}
\begin{document}
\title{Feature selection in machine learning:\\ R\'enyi min-entropy vs Shannon entropy}
%
%
\author{Catuscia Palamidessi\inst{1} \and
Marco Romanelli\inst{1,2}}
\authorrunning{C. Palamidessi and M. Romanelli}
%
\institute{Inria and \'Ecole Polytechnique, France \and
Universit\`a di Siena, Italy
}
\maketitle              
\begin{abstract}
Feature selection, in the context of machine learning,  is the process of 
separating the highly predictive feature from those that might
be  irrelevant or redundant.
Information theory has been recognized as a useful concept for this task, as 
the prediction power stems from the correlation, i.e., the mutual information, between features and labels. 
Many algorithms for feature selection in the literature have adopted the Shannon-entropy-based mutual information.  
In this paper, we explore the  possibility of using R\'enyi min-entropy  instead. In particular, we propose an algorithm based on a notion of conditional 
R\'enyi  min-entropy that has been  recently adopted in the field of security and privacy, and which  is strictly related to the Bayes error. 
We prove that in general the two approaches are incomparable, in the sense that we show that we can construct  datasets on which the 
R\'enyi-based algorithm performs better than the corresponding Shannon-based one, and datasets on which the situation is reversed.
In practice, however, when considering datasets of real data, it seems  that the  R\'enyi-based algorithm tends to outperform the other one. 
We have effectuate several experiments on the BASEHOCK, SEMEION, and GISETTE datasets, and in all of them we have indeed observed that 
the R\'enyi-based algorithm gives better results.

\end{abstract}
%
%
%


\section{Introduction}\label{sec:introduction}
Machine learning has made huge  advances in recent years and is having an increasing impact on many aspects of everyday life,  as well as on  industry, science and medicine. Its power, with respect to traditional programming, relies on the capability of acquiring knowledge from experience, and more specifically, learning from data samples.

Machine learning has actually been around for quite some time: the term was coined by Arthur Samuel in 1959. The reason for the recent rapid expansion is primarily due to 
the huge amount of data that are being collected and made available, and the increased computing power, accessible at an affordable price,  to process these data. 

As the size of available datasets is becoming larger, both in terms of samples and in terms of number of features of the  data, it becomes more important to keep the dimensionality of the data under control and to  identify  the ``best'' features  on which to focus  the learning process. This is crucial to avoid an explosion of  the training complexity,
 improve the accuracy of the prediction,  and provide a better understanding of the model. Several papers in the literature of machine learning have considered this problem, including 
 \cite{Bennesar:15:ESA,Brown:12:JMLR,Cai:18:Neurocomputing,Einicke:18:JBHI,Guyon:03:MLR,Jain:00:TPAMI,Liu:05:TKDE,Liu:18:KBS,Nakariyakul:18:KBS,Sheikhpour:17:PR,Vergara:14:NCA}, to mention a few.
 
The known methods for reducing  the dimensionality can be divided in two categories: those which transform the feature space by  reshaping
the original features into new ones (\emph{feature extraction}), and those which select a subset of the features (\emph{feature selection}). 
The second category can in turn be divided in three groups: the \emph{wrapper},  the \emph{embedded},  and the \emph{filter} methods. 
The last group has the advantage of being classifier-independent, more robust with respect to the risk of overfitting, 
and more amenable to a principled approach. In particular, several proposals for  feature selection  
have successfully  applied concepts and techniques from information theory 
\cite{Battiti:94:TNN,Bennesar:15:ESA,Brown:12:JMLR,Fleuret:04:JMLR,Peng:05:TPAMI,Vergara:14:NCA,Yang:99:AIDA}. 

In this paper we focus on the filter method for classifiers, namely for machines that are trained to classify samples on the basis of their features. 
A typical example in the medical world is a predictor of the type of illness (class) given a set of symptoms (features). 
Another example in image recognition is a machine identifying a person (class) given the physical characteristics (features) visible in a picture. 
In this context, the information-theoretic  approaches to feature selection are based on the idea that  
the larger   the correlation between the selected set of features and the classes is, 
the more the classification task is likely to be correct. 
The problem of feature selection corresponds therefore to identifying a set of features as small as  possible, 
whose mutual information with the classes is above a certain threshold. 
Equivalently, since the entropy of the classes is fixed, the goal can also be  formulated in terms of the conditional entropy (aka residual entropy) of the classification given the set of features. 
Note that such residual entropy represents the \emph{uncertainty} on the correct classification of a sample once we know  the values of its selected features. Hence the goal is to
select  the smallest set of features that reduce the uncertainty of the classification to an acceptable level. 

More formally, the problem of feature selection can be stated as follows: given  random variables $F$ and $C$, modeling respectively a set of \emph{features} and a set of \emph{classes}\footnote{When clear from the context, we will use the same notation to represent both the random variable and its supporting set.}, 
  find a minimum-size subset $S \subseteq F$ such that the conditional entropy of $C$ given $S$ is below a certain threshold. Namely:
\begin{equation}\label{eq:treshold}
S = \argmin\limits_{S'}\, \{|S'|  \;  \mid \; {S'\subseteq F}\; \mbox{ and } \; H(C \mid S') \leq h\}
\end{equation}
where $h$ is the given threshold, $|S'|$  is the number of elements of $S'$, and $H(C \mid S')$ is the conditional  Shannon entropy of $C$ given $F$.

All the information-theoretic approaches to feature selection that have been proposed are based, as far as we know, on Shannon entropy, 
with the notable exception of \cite{Endo:13:CIARP} that considered the R\'enyi  entropies $H_\alpha$, where $\alpha$ is a parameter ranging over all the positive reals and $\infty$.  
In this paper we explore the particular case  of $H_\infty$,  called (\emph{R\'enyi}) \emph{min-entropy},  we develop an approach to feature selection based on min-entropy,   and we compare it with the one based on Shannon entropy. 
Our approach and analysis actually depart significantly from  \cite{Endo:13:CIARP}; the differences with that work will be explained in Section~\ref{sec:related}.

The starting point for an approach based on the min-entropy is, naturally, to replace the conditional  entropy in \eqref{eq:treshold} by  conditional min-entropy. 
Now, R\'enyi did not define the conditional version of his entropies, but there have been various proposals for it, 
in particular those by Arimoto \cite{Arimoto:75:CMJB}, Sibson \cite{Sibson:69:ZWG},  Csisz\'ar \cite{Csiszar:95:TIT}, and Cachin  \cite{Cachin:97:PhD}.
The variant that we consider here is the one by Arimoto \cite{Arimoto:75:CMJB}, which in the case $\alpha=\infty$
 has recently  become popular in security thanks to Geoffrey Smith, who  showed that it corresponds to the operational model of one-try  attack \cite{Smith:09:FOSSACS}.
More specifically,  Arimoto / Smith conditional min entropy captures the (converse of) the probability of error of a rational   attacker who knows the probability distributions and tries to infer a secret from some correlated observables. ``Rational'' here means that the attacker will try to minimize the expected probability of error,  by selecting the secret with highest posterior probability. 
Note the similarity with the classification problem, where the machine  chooses a class (secret) on the basis of the  features (observables), trying to minimize the expected probability of  classification error (misclassification).
It is therefore natural to investigate the potentiality of this notion in the  context of feature selection.
Note that, since we assume that the  attacker is rational and knows the probability distributions, the attacker is the Bayes attacker and, correspondingly, the classifier is the (ideal) Bayes classifier. The probability of misclassification is therefore the Bayes error.


By replacing the Shannon entropy $H$ with the  R\'enyi min-entropy $H_\infty$, the problem described in \eqref{eq:treshold} becomes:
\begin{equation}\label{eq:treshold-min}
S = \argmin_{S'}\, \{|S'|  \;  \mid \; {S'\subseteq F}\; \mbox{ and } \; H_\infty(C \mid S') \leq h\}
\end{equation}
where  $H_\infty(C \mid S')$ is the conditional min-entropy of C given $F$. 
Because of the correspondence between $H_\infty(\cdot \mid \cdot)$     and the  Bayes error, we can interpret \eqref{eq:treshold-min}   as stating that $S$ is the minimal set of features  for which the (ideal) Bayes classifier achieves the desired level of accuracy. 

\subsection{Contribution}
The contribution of this paper is the following:

\begin{itemize}
\item We formalize an approach to feature selection based on R\'enyi min-entropy.
\item We show that the problem of selecting the optimal set of features w.r.t. min-entropy, namely the  $S$ that satisfies \eqref{eq:treshold-min},  is NP-hard.
\item We propose an iterative greedy strategy of linear complexity  to approximate the optimal subset of features w.r.t. min-entropy. This strategy starts from the empty set, and adds a new feature at each step until we achieve the desired level of accuracy. 
\item We show that our strategy is \emph{locally optimal}, namely, at every step the new set of features is the optimal one among those that can be obtained from the previous one  by adding only one feature. (This does not imply, however, that the final result is \emph{globally optimal}.)
\item We compare our approach with that based on Shannon entropy, and we prove a negative result: neither of the two approaches is better than the other \emph{in all cases}.  
\item We compare the two approaches experimentally,  using the BASEHOCK, SEMEION, and GISETTE datasets. Despite the above incomparability result, the R\'enyi-based algorithm turns out to give better results in all these experiments.

\end{itemize}

\subsection{Plan of the paper}
In next section we recall some preliminary notions about information theory. 
In Section~\ref{sec:problem} we formulate the problem of feature-minimization and prove that it is NP-hard. 
In Section~\ref{sec:algorithm} we propose a linear greedy algorithm to approximate the solution, and we compare it with the analogous formulation in terms of Shannon entropy.
In Section~\ref{sec:ml} we show evaluations of our and Shannon-based algorithms on various datasets. 
In Section~\ref{sec:related} we discuss related work.
Section~\ref{sec:conclusion} concludes. 

\section{Preliminaries}\label{sec:preliminaries}

In this section we briefly review some basic notions from probability and information theory. We refer to \cite{Cover:91:BOOK} for more details.

Let $X, Y$ be discrete random variables with respectively $n$ and $m$ possible values: $\calx=\{x_1,x_2,\ldots,x_n\}$ and 
$\caly=\{y_1,y_2,\ldots,y_m\}$. 
Let $p_X(\cdot)$ and $p_Y(\cdot)$  indicate the probability distribution associated to $X$ and $Y$ respectively, 
and let  $p_{Y,X}(\cdot,\cdot)$  and $p_{Y|X}(\cdot|\cdot)$ indicate the joint and the conditional probability distributions, 
respectively. Namely,  $p_{Y,X}(x,y)$ represents the probability that $X=x$ and $Y=y$, while 
$p_{Y|X}(y|x)$ represents the probability that $Y=y$ given that  $X=x$. 
For simplicity, when  clear from the context, we will omit the subscript, and write for instance $p(x)$ instead  of $p_X(x)$. 

Conditional and joint probabilities are related by the chain rule 
$p(x,y)=p(x)\, p(y|x)$,
from which (by the commutativity of $p(x,y)$) we can derive the Bayes theorem: 
\begin{equation*}\label{eqn:bt}
p(x |y ) = \nicefrac{p(y |x )\,p(x )}{p(y )}.
\end{equation*}

The {\em R\'enyi entropies} (\cite{Renyi:61:Berkeley}) are a family of functions representing the uncertainty associated to a random variable.
Each  R\'enyi entropy is characterized by a non-negative real number  $\alpha$ (order), with $\alpha \neq 1$, and   is defined as
\begin{equation}\label{eqn:res}
H_\alpha(X) \stackrel{\rm def}{=} \frac{1}{1-\alpha}\log \Bigg(\sum_{i}  p(x_i)^\alpha \Bigg).
\end{equation}
If $p(\cdot)$ is uniform then all the R\'enyi entropies  are equal to $\log |X|$. 
Otherwise they are weakly decreasing in $\alpha$.
Shannon and R\'enyi min-entropy are   particular cases:
\[
\renewcommand{\arraystretch}{1.5}
\begin{array}{rll}
\alpha \rightarrow 1\qquad&H_1 (X) = - \sum_{x}  p(x)\log p(x) \quad &\mbox{Shannon entropy}\\ 
\alpha \rightarrow \infty\qquad&  H_\infty (X) = - \log \max_{x} p(x)&\mbox{min-entropy}
\end{array}
\]

Let $H_1(X,Y)$ represent  the \revision{joint entropy}  $X$ and $Y$.  
{\em  Shannon conditional entropy} of $X$ given $Y$ is the average residual entropy of $X$ once  $Y$ is known, and it is defined as
\begin{equation}\label{eqn:ce}
H_1(Y|X) \stackrel{\rm def}{=} 
 - \sum_y p(y) H_1 (X | Y = y)= 
\sum_{xy}  p(x,y)\log p(x|y) = H_1(X,Y) - H_1(Y).
\end{equation}
{\em Shannon mutual information} of $X$ and $Y$ represents the correlation of information between $X$ and $Y$, and it is defined as
\begin{equation}\label{eqn:mi}
 I_1(X;Y) \stackrel{\rm def}{=} H_1(X) - H_1(X|Y) =  H_1(X) + H_1(Y) - H_1(X,Y).
  \end{equation}
It is possible to show that $ I_1(X;Y) \geq 0$, with $ I_1(X;Y) = 0$ iff $X$ and $Y$ are independent, and that $ I_1(X;Y)= I_1(Y;X)$. 
Finally,  {\em Shannon conditional mutual information} is defined as:
\begin{equation}\label{eqn:shancmi}
I_1(X;Y|Z) \stackrel{\rm def}{=} H_1(X|Z) - H_1(X|Y,Z),
 \end{equation}
\\

\noindent
As for {\em R\'enyi  conditional min-entropy}, we use the version of \cite{Smith:09:FOSSACS}: 
\begin{equation}\label{eqn:ce}
H_\infty(X|Y) \stackrel{\rm def}{=}  -\log\sum_{y}  \max_x (p(y|x) p(x)).
\end{equation}
This definition closely corresponds to  the Bayes error, i.e., the expected error when we try to guess  $X$ once we know  $Y$, formally defined as
\begin{equation}\label{eqn:br}
{\it {\cal B}(X|Y)} \stackrel{\rm def}{=} 1 - \sum_{y} p(y) \,\max_x p(x|y).
 \end{equation}
{\em R\'enyi mutual information} is  defined as: 
\begin{equation}\label{eqn:mimin}
I_\infty(X;Y) \stackrel{\rm def}{=} H_\infty(X) - H_\infty(X|Y).
\end{equation}
It is possible to show that $ I_\infty(X;Y) \geq 0$, and that $ I_\infty(X;Y) = 0$ if  $X$ and $Y$ are independent (the reverse is not necessarily true). Contrary to Shannon mutual information,   $ I_\infty$ is not symmetric.
{\em R\'enyi conditional mutual information} is defined as 
\begin{equation}\label{eqn:mimin}
I_\infty(X;Y|Z) \stackrel{\rm def}{=} H_\infty(X|Z) - H_\infty(X|Y,Z).
 \end{equation}

\section{Formulation of the problem and its complexity}\label{sec:problem}
In this section we state formally the problem of finding a minimal set of features  that  satisfies a given bound on the classification's accuracy, and then we   show that the problem is NP-hard. 
More precisely, we are interested in finding a minimal set with respect to which the posterior R\'enyi min-entropy of the classification is bounded by a given value. We recall that the posterior R\'enyi min-entropy   is equivalent to the Bayes classification error. 

The corresponding problem for Shannon entropy is well studied in the literature of feature selection, and its NP-hardness   is a folk theorem in the area.  
However for the sake of comparing it with the case of R\'enyi min-entropy, we   restate it here in the same terms as for the latter.   

In the following, $F$ stands for the set of all features, and $C$ for the random variable that takes value in the set of classes.

\begin{definition}[\sc Min-set]
Let $h$ be a   non-negative real. The minimal-set problems for Shannon entropy and for R\'enyi min-entropy are defined as the problems of   determining the set of features $S$ such that 
\[ \begin{array}{lll}
\mbox{\rm Shannon } \quad\;\;&\mbox{\sc Min-Set}_1 \quad\;\;& 
S = \argmin\limits_{S'}\, \{|S'|  \;  \mid \; {S'\subseteq F}\; \mbox{ and } \; H_1(C \mid S') \leq h\},\\[1ex]
\mbox{{\rm R\'enyi }} \quad&\mbox{\sc Min-Set}_\infty \quad& 
S = \argmin\limits_{S'}\, \{|S'|  \;  \mid \; {S'\subseteq F}\; \mbox{ and } \; H_\infty(C \mid S') \leq h\}.
\end{array}
\]
\end{definition}

We now show that  the above problems are NP-hard

\begin{theorem}
Both   {\sc Min-Set}$_1$ and {\sc Min-Set}$_\infty$ are NP-hard.
\end{theorem}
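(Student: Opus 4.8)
The plan is to give a single polynomial-time reduction that settles both problems at once, reducing from \textsc{Minimum Set Cover}, whose decision version is classically NP-complete. The guiding observation is that at the threshold $h=0$ the two notions of conditional entropy coincide: since $\sum_{s}\max_c p(c,s)=\sum_s p(s)\max_c p(c\mid s)$, we have $H_\infty(C\mid S')=0$ exactly when $\max_c p(c\mid s)=1$ for every feature value $s$ with $p(s)>0$, and the Shannon version $H_1(C\mid S')=0$ holds under the very same condition. Hence $H_1(C\mid S')=0 \iff H_\infty(C\mid S')=0 \iff C$ is a deterministic function of the features in $S'$ on the support of the distribution. Thus a construction forcing the entropy constraint to mean ``$S'$ determines $C$'' serves \textsc{Min-Set}$_1$ and \textsc{Min-Set}$_\infty$ simultaneously, and since hardness of the restricted instances with $h=0$ implies hardness of the general problem, this is all we need.

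Next I would describe the construction. Given a \textsc{Set Cover} instance with universe $U=\{u_1,\dots,u_n\}$ and subsets $T_1,\dots,T_m\subseteq U$ (assuming w.l.o.g.\ that every $u_i$ lies in some $T_j$), introduce one feature $f_j$ for each subset $T_j$, and build $n+1$ equiprobable data points $d_0,d_1,\dots,d_n$. Assign class $C=0$ to $d_0$ and class $C=1$ to each $d_i$ with $i\ge 1$. Set the value of $f_j$ to $0$ on $d_0$, and on $d_i$ set $f_j$ to $1$ if $u_i\in T_j$ and to $0$ otherwise. This defines a joint distribution $p_{C,F}$ computable in polynomial time.

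Then I would prove the equivalence driving the reduction. Because all the class-$1$ points share the same class, the only pairs a feature subset must separate in order to determine $C$ are the pairs $(d_0,d_i)$. A feature $f_j$ distinguishes $d_0$ from $d_i$ precisely when $u_i\in T_j$, so a subset $S'\subseteq F$ separates $d_0$ from every $d_i$ iff $\bigcup_{f_j\in S'}T_j=U$. Combined with the characterization above, this yields $H_1(C\mid S')=H_\infty(C\mid S')=0$ iff $\{T_j:f_j\in S'\}$ is a set cover of $U$, with $|S'|$ equal to the number of chosen subsets. Consequently an optimal solution to either problem at $h=0$ returns a minimum set cover; the decision version (``is there $S'$ with $|S'|\le k$ and entropy $\le 0$?'') is therefore NP-hard, and since $H(C\mid S')$ is polynomial-time computable from the distribution it lies in NP, so it is NP-complete and the optimization problems are NP-hard.

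The main obstacle I anticipate is not the combinatorics, which is routine, but the careful justification that the two conditional-entropy notions genuinely agree at the chosen threshold and that ``entropy $\le 0$'' is equivalent to functional determinacy; this must be argued directly from the definition of the Bayes error in \eqref{eqn:br} and its correspondence with $H_\infty(C\mid S')$, together with the standard fact that Shannon conditional entropy vanishes iff each posterior is a point mass. One must also confirm that the full feature set $F$ always realizes entropy $0$ (guaranteed by the assumption that each $u_i$ belongs to some $T_j$), so that feasible solutions exist and the $\argmin$ is well defined.
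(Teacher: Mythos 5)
Your proof is correct, but it takes a genuinely different route from the paper's. Both arguments pivot on the same key observation---at threshold $h=0$ the conditions $H_1(C\mid S')=0$ and $H_\infty(C\mid S')=0$ coincide, each being equivalent to the features in $S'$ functionally determining $C$ on the support of the distribution---but from there you build the hardness yourself, via an explicit polynomial-time reduction from \textsc{Set Cover}: one feature per subset $T_j$, points $d_0,d_1,\dots,d_n$, and the clean equivalence that $S'$ separates every cross-class pair $(d_0,d_i)$ iff $\bigcup_{f_j\in S'}T_j=U$. The paper instead reduces from the \textsc{Min-Features} problem, citing the NP-hardness result of Davies and Russell (itself obtained via \textsc{Vertex-Cover}), and only verifies that zero entropy for either notion is the same as the ``no two examples with identical $S'$-values but different classes'' condition of \textsc{Min-Features}. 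The two constructions are close cousins---\textsc{Vertex-Cover} is a special case of \textsc{Set Cover}, and your ``distinguish $d_0$ from each $d_i$'' gadget plays the same role as the separation condition in the cited reduction---so what your version buys is self-containedness and some extra rigor: you derive the zero-entropy characterizations directly from the definitions of $H_1(\cdot\mid\cdot)$ and $H_\infty(\cdot\mid\cdot)$ (using the identity $\sum_s \max_c p(c,s)=\sum_s p(s)\max_c p(c\mid s)$ and the fact that both entropies are nonnegative, so ``$\leq 0$'' means ``$=0$''), you confirm feasibility so that the $\argmin$ is well defined, and you make explicit the passage from the optimization formulation to an NP-hard (indeed NP-complete) decision problem, all points the paper leaves implicit or delegates to the literature. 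What the paper's version buys in exchange is brevity and a documented chain of reductions to prior work.
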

\begin{proof}
Consider the following decisional problem {\sc Min-Features}: Let 
 $X$ be a set of examples, each of which is composed of a a binary
value specifying the value of the class and a vector
of binary values specifying the values of the  features.
Given a  number $n$, determine whether or not there exists
some feature set $S$ such that:
\begin{itemize}
\item
$S$ is a subset of the set of all input features.
\item
$S$  has cardinality $n$.
\item
There exists no two examples in $X$ that have identical values for all the features in $S$ but have different class values. 
\end{itemize}
In \cite{Davies:94:PSU} it is shown that 
{\sc Min-Features} is NP-hard by reducing 
to it the {\sc Vertex-Cover} problem, which is known to be NP-complete~\cite{Karp:72:CCC}.
We recall that the {\sc Vertex-Cover} problem 
problem may be stated as the following question:
given a graph $G$ with vertices $V$ and edges $E$, is there a
subset $V'$
of $V$, of size $m$, such that each edge in $E$ is
connected to at least one vertex in $V'$?

To complete the proof, it is sufficient to show that we can reduce {\sc Min-Features}
to  {\sc Min-Set}$_1$ and {\sc Min-Set}$_\infty$. 
Set $h=0$, and let 
\[S=\argmin\limits_{S'}\, \{|S'|  \;  \mid \; {S'\subseteq F}\; \mbox{ and } \; H_\alpha(C \mid S') = 0\},\] where $\alpha=1$ or $\alpha=\infty$.
Note that for both values of $\alpha$, $H_\alpha(C \mid S) = 0$ means that the uncertainty about $C$ is $0$ once we know the value of all features in $S$, and this is possible only if 
there exists no two examples in that have identical values for all the features in $S$ but have different class values. 
Hence to answer {\sc Min-Features} it is sufficient to check whether $|S|\leq m$ or $|S|> m$.
\qed
\end{proof}

Given that the problem is NP-hard, there is no ``efficient'' algorithm (unless P = NP) for computing exactly  the minimal set of features $S$ satisfying the bound on the accuracy. 
It is however possible to compute efficiently an approximation of it, as we will see in next section, where 
 we  propose a linear ``greedy'' algorithm  which computes an approximation of the minimal $S$. 

\section{Our proposed algorithm}\label{sec:algorithm}
Let $F$ be the set of features at our disposal, and let $C$ be random variable ranging on the set of classes. 
 Our algorithm is based on forward feature selection and  dependency maximization:
it constructs a monotonically increasing sequence  $\{S^t\}_{t\geq 0}$ of subsets of $F$, and,   
at each step, the subset  $S^{t+1}$ is obtained from $S^{t}$ by adding the next feature in order of importance (i.e., the informative contribution to classification),
taking into account the information already provided by $S^t$. The measure of the ``order of importance'' is based on conditional min-entropy.  The construction of the sequence is assumed to be done interactively with a test on  the accuracy achieved by  the current subset, using one or more classifiers. This test will provide the stopping condition: once we obtain the desired level of accuracy, the algorithm stops and gives  as result the current subset $S^T$. 
Of course, achieving a level of accuracy $1-\varepsilon$ is only possible if ${\cal B}(C \mid F) \leq \varepsilon$.
\\
\begin{definition}\label{def:agorithm}
The series $\{S^t\}_{t\geq 0}$ and $\{f^t\}_{t\geq 1}$ are inductively defined as follows:  
\[
\begin{array}{rcl}
S^{0} &\stackrel{\rm def}{=} & \emptyset\\
f^{t+1} &\stackrel{\rm def}{=} &\argmin_{f \in F\setminus S^t} H_\infty(C  \mid  f, S^t)\\
S^{t+1} &\stackrel{\rm def}{=}&  S^t\cup \{f^{t+1}\}\\
\end{array}
\]
\end{definition}

The algorithms in \cite{Brown:12:JMLR} and \cite{Vergara:14:NCA} are analogous, except that they use Shannon entropy. They also 
define $f^{t+1}$ based on the   maximization of mutual information instead   of  the minimization of conditional entropy, but this is irrelevant. In fact
$I_1 (C ; f  \mid  S^t) = H_1(C  \mid  S^t) - H_1(C  \mid  f, S^t)$,
hence maximizing $I_1 (C ; f  \mid  S^t)$ with respect to $f$ is the same as minimizing $H_1(C  \mid  f, S^t)$ with respect to $f$.

Our algorithm is locally optimal, in the sense stated by the following proposition.
\begin{proposition}\label{prop:one_step}
At every step, the set $S^{t+1}$ minimizes the Bayes error of the classification among those which are of the form $S^{t}\cup\{f\}$, namely:
\[
\forall f\in F\;\;{\cal B}(C \mid S^{t+1}) \leq  {\cal B}(C \mid S^{t}\cup \{f\} ) 
\]
\end{proposition}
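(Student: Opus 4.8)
The plan is to reduce the statement to the greedy choice made in Definition~\ref{def:agorithm} by exhibiting an exact, order-preserving correspondence between the conditional min-entropy $H_\infty(C \mid S')$ and the Bayes error ${\cal B}(C \mid S')$. The first and central step is to prove the identity
\[
H_\infty(C \mid S') = -\log\bigl(1 - {\cal B}(C \mid S')\bigr).
\]
To derive it I would start from the definition \eqref{eqn:ce} of conditional min-entropy, treating the set $S'$ as a single (vector-valued) random variable $Y$, and rewrite each term $\max_c\bigl(p(y \mid c)\,p(c)\bigr)$ using the chain rule $p(y\mid c)\,p(c) = p(c,y) = p(c\mid y)\,p(y)$. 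Since $p(y)$ does not depend on $c$, it factors out of the maximization, giving $\max_c p(c,y) = p(y)\,\max_c p(c\mid y)$. Summing over $y$ yields $\sum_y \max_c\bigl(p(y\mid c)p(c)\bigr) = \sum_y p(y)\,\max_c p(c\mid y)$, which by \eqref{eqn:br} is exactly $1 - {\cal B}(C\mid Y)$; taking $-\log$ of both sides establishes the identity.

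The second step exploits that the function $g(x) = -\log(1-x)$ is strictly increasing on $[0,1)$. Consequently, over any fixed family of candidate feature sets, the set minimizing $H_\infty(C\mid \cdot)$ is precisely the set minimizing ${\cal B}(C\mid \cdot)$. Applying this to the family $\{S^t\cup\{f\} : f\in F\setminus S^t\}$: by definition $f^{t+1}$ minimizes $H_\infty(C\mid f,S^t)$ over $f\in F\setminus S^t$, hence it also minimizes ${\cal B}(C\mid f,S^t)$, which gives ${\cal B}(C\mid S^{t+1}) = {\cal B}(C\mid f^{t+1},S^t) \leq {\cal B}(C\mid S^t\cup\{f\})$ for every $f\in F\setminus S^t$.

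It remains to cover the degenerate case $f\in S^t$, for which $S^t\cup\{f\} = S^t$ and the claim reads ${\cal B}(C\mid S^{t+1})\leq {\cal B}(C\mid S^t)$. For this I would invoke the monotonicity of conditional min-entropy under additional conditioning, $H_\infty(C\mid f^{t+1},S^t)\leq H_\infty(C\mid S^t)$, which follows from $\max_c\sum_z p(c,y,z) \leq \sum_z \max_c p(c,y,z)$ (the maximum of a sum is bounded by the sum of maxima) applied inside the logarithm; transferring through $g$ again yields the Bayes-error inequality. I expect the only real work to be the bridge identity of the first paragraph: once the exact monotone relationship between $H_\infty(C\mid \cdot)$ and ${\cal B}(C\mid \cdot)$ is in hand, the local optimality is an immediate consequence of the greedy definition, and no genuine obstacle remains.
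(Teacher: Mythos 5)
Your proof is correct and follows essentially the same route as the paper's: both hinge on rewriting $\sum_y \max_c\bigl(p(y\mid c)p(c)\bigr)$ via Bayes' theorem as $\sum_y p(y)\max_c p(c\mid y) = 1-{\cal B}(C\mid Y)$, so that minimizing $H_\infty(C\mid\cdot)$ over the candidate sets $S^t\cup\{f\}$ is equivalent to minimizing the Bayes error. Your explicit treatment of the degenerate case $f\in S^t$ (via $\max_c\sum_z p(c,y,z)\leq\sum_z\max_c p(c,y,z)$) is a small, correct addition that the paper's proof glosses over by asserting the entropy inequality for all $f\in F$ directly.
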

\begin{proof}
Let $\vec{v},v, v'$ represent generic value tuples and values of $S^{t}$,   $f$ and $f^{t+1}$, respectively. Let $c$ represent the generic value of $C$. By definition, $H_\infty(C  \mid S^{t+1}) \leq H_\infty(C  \mid S^{t}\cup \{f\})$, for every $f\in F$. From \eqref{eqn:ce}  we then obtain 
\[
\sum_{\vec{v},v}  \max_c (p( \vec{v},v | c) p(c)) \leq \sum_{\vec{v},v'}  \max_c (p(\vec{v},v' | c) p(c) )
\]
Using the Bayes theorem~\eqref{eqn:bt}, we get
\[
\sum_{\vec{v},v}  p( \vec{v},v) \max_c p( c | \vec{v} )  \leq \sum_{\vec{v},v'} p(\vec{v},v') \max_c p(\vec{v},v' | c)   
\]
Then, from  the definition \eqref{eqn:br} we deduce 
\[
{\cal B}(C \mid S^{t}\cup \{f^{t+1}\} ) \leq  {\cal B}(C \mid S^{t}\cup \{f\} ) 
\]
\qed
\end{proof}

In the \revision{following sections we analyze some extended examples} to illustrate how the algorithm works, and also compare it  with the ones   of \cite{Brown:12:JMLR} and \cite{Vergara:14:NCA}.
\subsection{\revision{An example in which R\'enyi min-entropy gives a better feature selection than Shannon entropy}}
Let us consider the dataset in Fig.~\ref{fig:exe1}, containing ten records labeled each  by a different class, and characterized by six features (columns $f_{1}$, \ldots, $f_{5}$).
We note that   $f_{0}$ separates the classes in two sets of four and six elements respectively, while all the other columns are characterized by having two values, each of which  univocally identify one class, while the third value is associated to all the remaining classes.  For instance, in column $f_{1}$ value A univocally identifies the record of class $0$,  B univocally identifies the record of class $1$, and all the other records have the same value along that column, i.e. C. 

The last five features combined are necessary and sufficient ton completely identify  all classes, without the need of the first one. 
Note of the last five features can be replaced by $f_{0}$ for this purpose. In fact,  each pair of  records which are separated by one of the features $f_{1}$, \ldots, $f_{5}$, have the same value in column $f_{0}$.

\begin{figure} 
\begin{center}
\newcolumntype{g}{>{\columncolor{LightMagenta}}c}
\begin{tabular}{|g|c|c|c|c|c|c|}
\hline
\rowcolor{LightCyan}
\; Class       \;       &\; $f_{0}$ \;&\; $f_{1}$ \;&\; $f_{2}$ \;&\; $f_{3}$ \;&\; $f_{4}$  \;&\; $f_{5}$\;\\
\hline
0 & A & C & F & I & L & O\\
\hline
1 & A & D & F & I & L & O\\
\hline
2 & A & E & G & I & L & O\\
\hline
3 & A & E & H & I & L & O\\
\hline
4 & B & E & F & J & L & O\\
\hline
5 & B & E & F & K & L & O\\
\hline
6 & B & E & F & I & M & O\\
\hline
7 & B & E & F & I & N & O\\
\hline
8 & B & E & F & I & L & P\\
\hline
9 & B & E & F & I & L & Q\\
\hline
\end{tabular}
\caption{The dataset}\label{fig:exe1}
\end{center}
\end{figure}

If we apply the discussed feature selection method and we look for the feature that minimizes $H(C|f_{i})$ for $i \in \{0, \ldots ,5\}$ we  obtain that:
\begin{itemize}
\item The first feature selected with Shannon   is $f_{0}$, in fact $H_1(C|f_{0}) \approx 2.35$ and $H_1(C|f_{\ne 0}) = 2.4$.  (The notation $f_{\ne 0}$ stands for any of the $f_i$'s except $f_0$.)
In general, indeed, with Shannon entropy the method tends to choose a feature which splits the Classes in a way as balanced as possible. The situation after the selection of the feature $f_{0}$ is shown in Fig.~\ref{fig:first_feature}(a). 
\item The first feature selected with R\'enyi min-entropy is either $f_{1}$  or $f_{2}$ or $f_{3}$ or $f_{4}$ or $f_{5}$, in fact $H_{\infty}(C|f_{0}) \approx 2.32$ and $H_{\infty}(C|f_{\ne 0}) \approx 1.74$. In general, indeed, with R\'enyi min-entropy the method tends to choose a feature which divides the classes in as many sets as possible. The situation after the selection of  $f_{1}$ is shown in Fig.~\ref{fig:first_feature}(b). 
\end{itemize} 

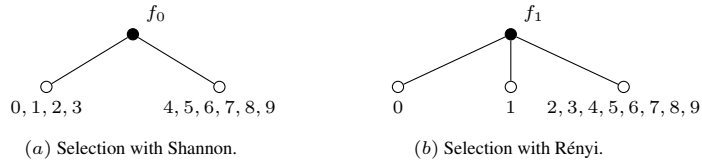
\begin{figure*}[t!]
    \centering
\begin{tikzpicture}[scale=1,font=\scriptsize][baseline=(current axis.outer east)]
\tikzstyle{solid node}=[circle,draw,inner sep=1.5,fill=black]
\tikzstyle{hollow node}=[circle,draw,inner sep=1.5]
\tikzstyle{level 1}=[level distance=7mm,sibling distance=2.3cm]
\node(0)[solid node,label=above right:{$f_{0}$}]{}
child{node[hollow node,label={[align=center]below:$ 0, 1, 2 , 3 $}]{}}
child{node[hollow node,label={[align=center]below:$ 4,  5, 6, 7, 8, 9$}]{}};
\node [below=1.3cm, align=flush center ]  { $(a)$ Selection with Shannon.  };
 \end{tikzpicture}
\qquad\qquad
\begin{tikzpicture}[scale=1,font=\scriptsize]
\tikzstyle{solid node}=[circle,draw,inner sep=1.5,fill=black]
\tikzstyle{hollow node}=[circle,draw,inner sep=1.5]
\tikzstyle{level 1}=[level distance=7mm,sibling distance=1.5cm]
\node(0)[solid node,label=above right:{$f_{1}$}]{}
child{node[hollow node,label=below:{$  0 $}]{}}
child{node[hollow node,label=below:{$  1 $}]{}}
child{node[hollow node,label={[align=center]below:$ 2, 3, 4, 5, 6, 7, 8, 9$}]{}};
\node [below=1.3cm, align=flush center ]  { $(b)$ Selection with R\'enyi. };
\end{tikzpicture}
    \caption{Classes separation  after the selection of the first feature.}\label{fig:first_feature}

\end{figure*}

\begin{figure*}[t!]
    \centering
\begin{tikzpicture}[scale=.6,font=\tiny][baseline=(current axis.outer east)]
\tikzstyle{solid node}=[circle,draw,inner sep=1.5,fill=black]
\tikzstyle{hollow node}=[circle,draw,inner sep=1.5]
\tikzstyle{level 1}=[level distance=10mm,sibling distance=1.5cm]
\node(0)[solid node,label=above right:{$f_{1}$}]{}
child{node[hollow node,label=below:{$ 0$}]{}}
child{node[hollow node,label=below:{$ 1$}]{}}
child{node[solid node,label=above right:{$f_{2}$}]{}
child{node[hollow node,label={[align=center]below:  $2$}]{}}
child{node[hollow node,label={[align=center]below:  $3$}]{}}
child{node[hollow node,label={[align=center]below:$4, 5, 6$\\$7, 8,  9$}]{}}};
\node [below=2cm, align=flush center ]  { $H_{\infty}(C|f_{1}f_{2}) = 1$.  };
\end{tikzpicture}
\quad
\begin{tikzpicture}[scale=.6,font=\tiny]
\tikzstyle{solid node}=[circle,draw,inner sep=1.5,fill=black]
\tikzstyle{hollow node}=[circle,draw,inner sep=1.5]
\tikzstyle{level 1}=[level distance=10mm,sibling distance=1.5cm]
\node(0)[solid node,label=above right:{$f_{1}$}]{}
child{node[hollow node,label=below:{$  0$}]{}}
child{node[hollow node,label=below:{$ 1$}]{}}
child{node[solid node,label=above right:{$f_{0}$}]{}
child{node[hollow node,label={[align=center]below: $2, 3$\quad}]{}}
child{node[hollow node,label={[align=center]below: $ 4, 5, 6$\\$7, 8,  9$}]{}}};
\node [below=2cm, align=flush center ]  {$H_{\infty}(C|f_{1}f_{0}) \approx 1.32$. };
\end{tikzpicture}
\quad
\begin{tikzpicture}[scale=.6,font=\tiny]
\tikzstyle{solid node}=[circle,draw,inner sep=1.5,fill=black]
\tikzstyle{hollow node}=[circle,draw,inner sep=1.5]
\tikzstyle{level 1}=[level distance=10mm,sibling distance=1.5cm]
\node(0)[solid node,label=above right:{$f_{4}$}]{}
child{node[hollow node,label=below:{$ 0$}]{}}
child{node[hollow node,label=below:{$ 1$}]{}}
child{node[solid node,label=above right:{$f_{0}$}]{}
child{node[hollow node,label={[align=center]below:$0, 1$\\$ 2, 3 $}]{}}
child{node[hollow node,label={[align=center]below:$6, 7$\\$8,  9$}]{}}};
\node [below=2cm, align=flush center]  {$H_{\infty}(C|f_{4}f_{0}) \approx 1.32$. };
\end{tikzpicture}
    \caption{Selection of the second feature with R\'enyi.}\label{fig:second_feature}
\end{figure*}
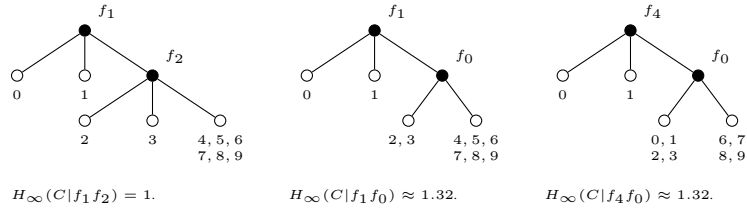
Going ahead with the algorithm,  with  Shannon entropy we will select one by one all the other features, and as already discussed we will need  all of them to completely identify all classes. 
Hence at the end the method with Shannon entropy will return all the six features (to achieve perfect classification).     
On the other hand, with R\'enyi min entropy we will select all the remaining features except $f_{0}$ to obtain the perfect discrimination. 
In fact, at any stage the selection of $f_0$ would allow to split the remaining classes in at most two sets, while any other feature not yet considered will split the remaining classes
 in three sets.  As already hinted, with R\'enyi we choose the feature that allows to split the remaining classes in the highest number of sets, hence we never select $f_0$.
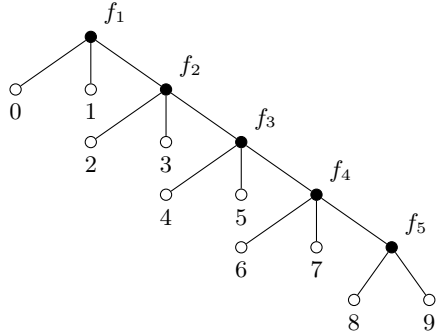
\begin{figure}
    \centering
\begin{tikzpicture}[font=\footnotesize]
\tikzstyle{solid node}=[circle,draw,inner sep=1.5,fill=black]
\tikzstyle{hollow node}=[circle,draw,inner sep=1.5]
\tikzstyle{level 1}=[level distance=7mm,sibling distance=1cm] 
\node(0)[solid node,label=above right:{$f_{1}$}]{}
child{node[hollow node,label=below:{$ 0$}]{}}
child{node[hollow node,label=below:{$ 1$}]{}}
child{node[solid node,label=above right:{$f_{2}$}]{}
child{node[hollow node,label=below:{$ 2$}]{}}
child{node[hollow node,label=below:{$ 3$}]{}}
child{node[solid node,label=above right:{$f_{3}$}]{}
child{node[hollow node,label=below:{$ 4$}]{}}
child{node[hollow node,label=below:{$ 5$}]{}}
child{node[solid node,label=above right:{$f_{4}$}]{}
child{node[hollow node,label=below:{$  6$}]{}}
child{node[hollow node,label=below:{$  7$}]{}}
child{node[solid node,label=above right:{$f_{5}$}]{}
child{node[hollow node,label=below:{$  8$}]{}}
child{node[hollow node,label=below:{$ 9$}]{}}
}}}
};
\end{tikzpicture}
    \caption{Sequence of class splitting with R\'enyi.}\label{fig:all_sequence}
\end{figure}
For instance, if we have already selected $f_{1}$, we have $H_{\infty}(C|f_{1}f_{0}) \approx 1.32$ while $H_{\infty}(C|f_{1}f_{\ne 0}) = 1$. If we have already selected $f_{4}$, we have $H_{\infty}(C|f_{4}f_{0}) \approx 1.32$ while $H_{\infty}(C|f_{4}f_{\ne 0}) = 1$. See Fig.~\ref{fig:second_feature}. 

At the end, the selection of features using R\'enyi entropy will determine the progressive splitting represented in Fig.~\ref{fig:all_sequence}.  The order of selection is not important: this particular example is conceived so that the features
 $f_{1}$, \ldots, $f_{5}$ can be selected in any order, the residual entropy is always the same. \\[-2ex]
 
\noindent
{\bf Discussion}
 It is easy to see that, in this example,  the algorithm based on R\'enyi min-entropy gives a better result not only at the end, but also at each step of the process. 
Namely, at step $t$ (cfr. Definition~\ref{def:agorithm}) the set $S^t$ of features selected with R\'enyi min-entropy gives a better classification (i.e., more accurate) than the set ${S'}^t$ that would be selected using Shannon entropy. 
More precisely, we have ${\cal B}(C \mid S^{t}) <  {\cal B}(C \mid {S'}^{t})$. 
In fact, as discussed above the set ${S'}^t$ contains necessarily the feature $f_0$, while $S^t$ does not. Let $S^{t-1}$ be the set of features selected at previous step with  R\'enyi min-entropy, and $f^{t}$ the feature selected at step $t$ (namely, $S^{t-1} = S^{t}\setminus\{f^t\}$). As argued above, the order of selection of the features $f_{1}$, \ldots, $f_{5}$ is irrelevant, hence we have ${\cal B}(C \mid S^{t-1}) = {\cal B}(C \mid {S'}^{t}\setminus \{f_0\})$ and the algorithm \emph{could} equivalently have selected ${S'}^{t}\setminus \{f_0\}$. As argued above, the next feature to be selected, with  R\'enyi, must be different from $f_0$. Hence by Proposition~\ref{prop:one_step}, and by the fact that the order of selection of $f_{1}$, \ldots, $f_{5}$ is irrelevant,  we have:  ${\cal B}(C \mid S^{t}) = {\cal B}(C \mid ({S'}^{t}\setminus \{f_0\})\cup \{f^t\}) <  {\cal B}(C \mid {S'}^{t})$. 

As a general observation, we can see  that  the method with Shannon tends to select the feature  that divides the classes in sets (one for each value of the feature) as balanced as possible, while our method tends to select the feature that divides the classes in as many sets as possible, regardless of the sets being balanced or not.  In general, both Shannon-based  and R\'enyi-based methods try to minimize the height of the tree representing the process of the splitting of the classes, but the first does it by trying to produce a tree \emph{as balanced as possible}, while the second one tries to do it by producing a tree \emph{as wide as possible}. 
Which of the method is best, it depends on the correlation of the features. Shannon works better when there are enough uncorrelated (or not much correlated) features, so that  the tree can be kept balanced while being constructed.  Next section shows an example of such situation. R\'enyi, on the contrary, is not so sensitive to correlation and can work well also when the features are highly correlated, as it was the case in the example of this section. 

The experimental results in Section~\ref{sec:ml} show that, at least in the cases we have considered, our method  outperforms the one based  on Shannon entropy. In general however the two methods are incomparable, and perhaps a good practice would be to construct both sequences at the same time, so to obtain the best result of the two.

\subsection{\revision{An example in which  Shannon entropy may give a better feature selection than R\'enyi min-entropy}}
Consider a dataset containing samples equally distributed among 32 classes, indexed from 0 to 31. 
Assume that the data have $8$ features divided in $2$ types  $F$ and $F'$, each of which consisting of $4$ features:
  $F=\{f_1,f_2,f_3,f_4\}$ and $F'=\{f'_1,f'_2,f'_3,f'_4\}$. 
\revision{The relation between the features and the classes is represented in Fig.~\ref{fig:feauresFandFprime}.}

\begin{figure}[t]
\begin{center}
\includegraphics[width=0.30\textwidth]{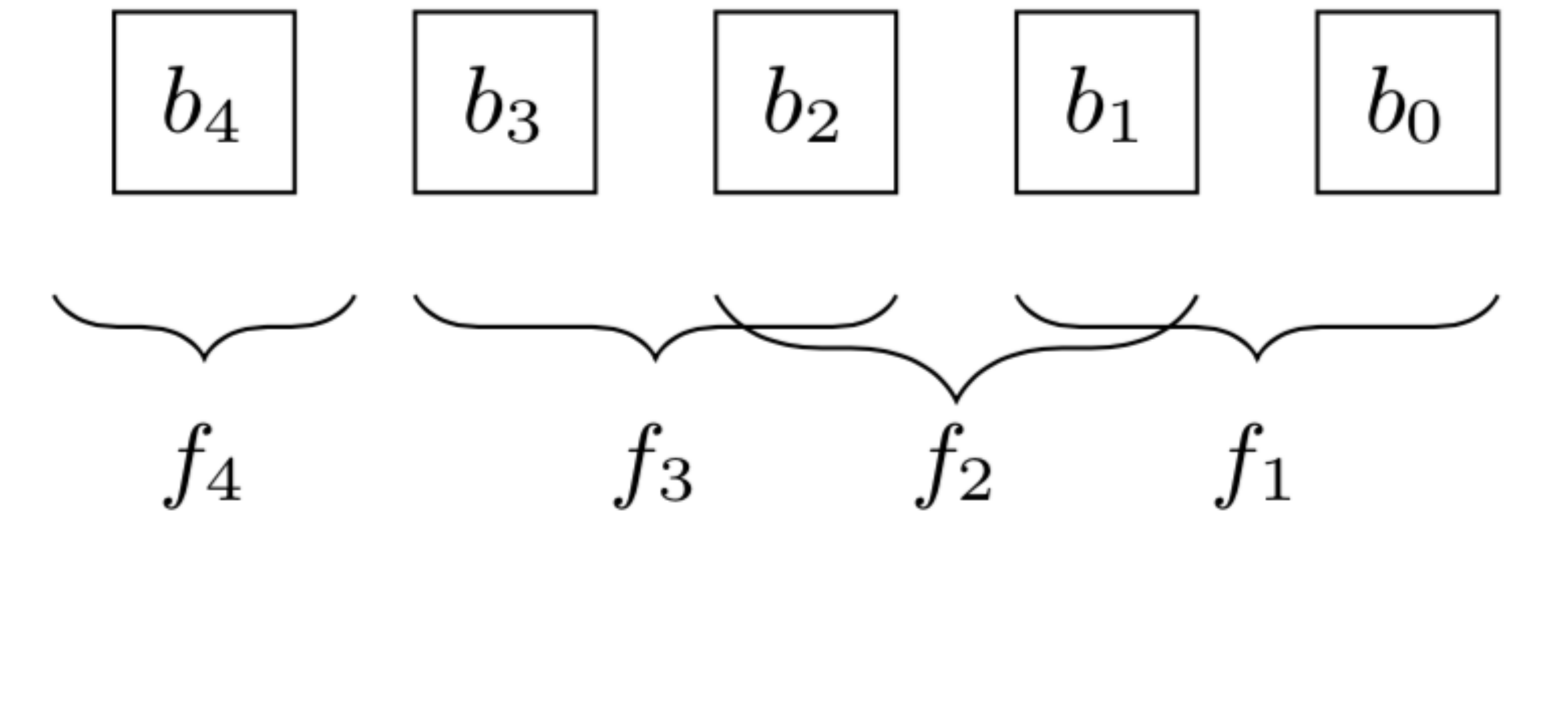} 
\qquad
\includegraphics[width=0.60\textwidth]{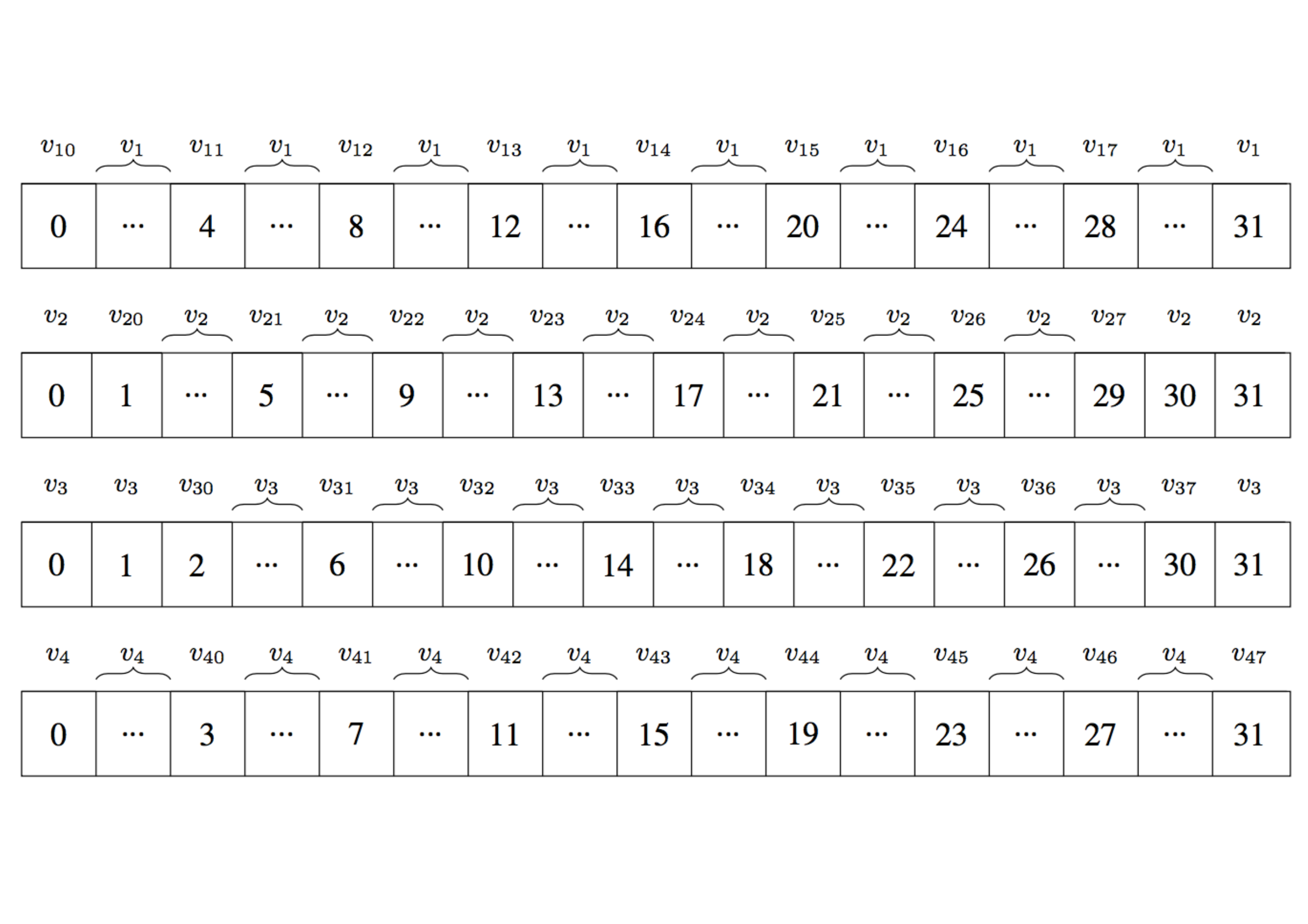} 
\end{center}
\vspace{-10mm}
\caption{Features $F$ (left)  and $F'$ (right).}\label{fig:feauresFandFprime}
\end{figure}

\revision{Because of space restriction we have omitted the computations, the interested reader can find them in the report version of this paper \cite{Palamidessi:18:REPORT}. 
At step $3$ one of the possible outcomes of the algorithm based on Shannon is the set of features $S_1^3= \{f_1,f_3,f_4\}$, and one of the possible outcomes of  the algorithm based on R\'enyi is  $S_\infty^3= \{f'_1, f'_2,f'_i\}$ where  $i$ can be, equivalently, $3$ or $4$. }
At this point the method with Shannon can stop, since the residual Shannon entropy of the classification is $H_1(C\mid S_1^3)=0$, and also the Bayes error is ${\cal B}(C\mid S_1^3)=0$, which is the optimal situation in the sense that the classification is completely accurate.
$S_\infty^3$ on the contrary does not contain enough features to give a completely accurate classification, for that we have to make a further step. 
We can see that  $S_\infty^4=F'$, and finally we have  $H_\infty(C\mid S_\infty^4)=0$.

Thus in this particular example we have that for small values of the threshold on the accuracy our method gives better results. On the other hand, if we want to achieve perfect accuracy (threshold $0$) Shannon gives better results.  

\section{Evaluation}\label{sec:ml}
In this section we evaluate   the method for feature selection that we have proposed, and we compare it with the one based on Shannon entropy by
\cite{Brown:12:JMLR} and \cite{Vergara:14:NCA}. 

To evaluate the effect of feature selection, some classification methods have to be trained and tested on the selected data. 
We  used two different methods   to avoid the dependency of the  result  on a particular algorithm.
We chose two widely used classifiers: the Support Vector Machines (SVM) and the Artificial Neural Networks (ANN).

Even though the two methods are very different, they have in common that  their efficiency is highly dependent on the choice of certain parameters. Therefore, it is worth spending some effort to identify the best values. Furthermore, we should  take  into account that the particular paradigm of SVM we chose only needs 2 parameters to be set, while for ANN the number of parameters increases (at least 4).

It is very important to choose values  as robust as possible for the parameters. 
It goes without saying that the strategy used to pick the best parameter setting should be the same  for both  Shannon entropy and  R\'enyi min-entropy.
On the other hand for  SVM and ANN we used two different hyper-parameter tuning algorithms, given that the number and the nature of the parameters to be tuned   for those classifiers is different.

In the case of SVM we tuned the cost parameter of the objective function for margin maximization (\textit{C-SVM}) and the  parameter which models the shape of the RBF kernel's bell curve (\textit{$\gamma$}).
Grid-search and Random-search are quite time demanding algorithms for the hyper-parameter tuning task but they're also widely used and referenced in literature when it comes to SVM. Following the guidelines in \cite{Chang:11:ACM} and \cite{Pedregosa:11:JML}, we decided to use Grid-search, which is quite suitable when we have to deal with only two parameters. In particular we performed Grid-search  including a 10 folds CV step.

Things are different with ANN  because many more parameters are involved and some of them  change the topology of the network itself.
Among the various strategies to attack this problem we picked Bayesian Optimization  \cite{Snoek:12:NIPS}. This  algorithm  combines steps of extensive search for a limited number of settings before inferring via Gaussian Processes (GP) which is the best setting to try next (with respect to the mean and variance and compared to the  best result obtained in the last iteration of the algorithm).
In particular we tried to fit the best model by optimizing the following parameters:
\begin{itemize}
\item number of hidden layers
\item number of hidden neurons in each layer
\item learning rate for the gradient descent algorithm 
\item size of batches to update the weight on network connections
\item number of learning epochs
\end{itemize}

To this purpose, we included in the pipeline of our code the \textit{Spearmint} Bayesian optimization codebase. Spearmint, whose theoretical bases are explained in \cite{Snoek:12:NIPS}, calls 
repeatedly an objective function to be optimized. In our case the objective function contained some \textit{tensorflow} machine learning code which run  a 10 folds CV over a dataset and the objective was to maximize the accuracy of validation. The idea was to obtain a model  able to generalize as much as possible using only the selected features before testing on a dataset which had never been seen before. 

We had to decide the stopping  criterion, which is not provided by 
\textit{Spearmint} itself. For the sake of simplicity we decided to run it for a time lapse which has empirically been proven to be sufficient in order to obtain  results  meaningful for comparison. 
A possible improvement  would be to keep running the same test (with the same number of features) for a certain amount of time without resetting the computation history of the package and only stop testing a particular configuration if the same results is output as the best for $k$ iterations in a row (for a given $k$).

Another factor, not directly connected to the different performances obtained with different entropies, but which is important for the optimization of ANN, is the choice of the activation functions for the layers of neurons. In our work we have used   ReLU    for all layers because it is well known that it works well for this aim, it is  easy to compute (the only operation involved is the max) and it avoids the sigmoid saturation issue.

\subsection{Experiments}
As already stated, at the $i$-th step of the feature selection algorithm  we consider all the features which have already been selected in the previous $i-1$ step(s). For the sake of limiting the execution time, 
we decided to consider only the first 50 selected features with both metrics.
We tried our pipeline on the following datasets:  

\begin{itemize}
\item BASEHOCK dataset: 1993 instances, 4862 features, 2 classes. This dataset has been obtained from the 20 newsgroup original dataset.
\item SEMEION dataset: 1593 instances, 256 features, 10 classes. This is a dataset with encoding of hand written characters.
\item GISETTE dataset: 6000 instances, 5000 features, 2 classes. This is the discretized version of the NIPS 2003 dataset which can be downloaded from the site of Professor Gavin Brown, Manchester University.
\end{itemize}

We implemented a bootstrap procedure (5 iterations on each dataset) to shuffle data and make  sure that the results do not depend on the particular split between training, validation and test set. 
 Each one of the 5 bootstrap iterations is a new and unrelated experimental run.  For each one of them a different training-test sets split was taken into account. Features were selected analyzing the training set  (the test set has never been taken into account for this part of the work). After the feature selection was executed according to both Shannon and R\'enyi min-entropy, we considered all the selected features adding one at each time.  So, for each bootstrap iteration we had 50 steps, and in each step we added one of the selected features, we performed hyper-parameter tuning with 10 folds CV, we trained the model with the best parameters on the whole training set and we tested it on the test set (which the model had never seen so far). This   procedure was performed both for SVM and ANN.

%
\begin{figure}[t]
\centering
\includegraphics[width=130pt, height=130pt,page=4]{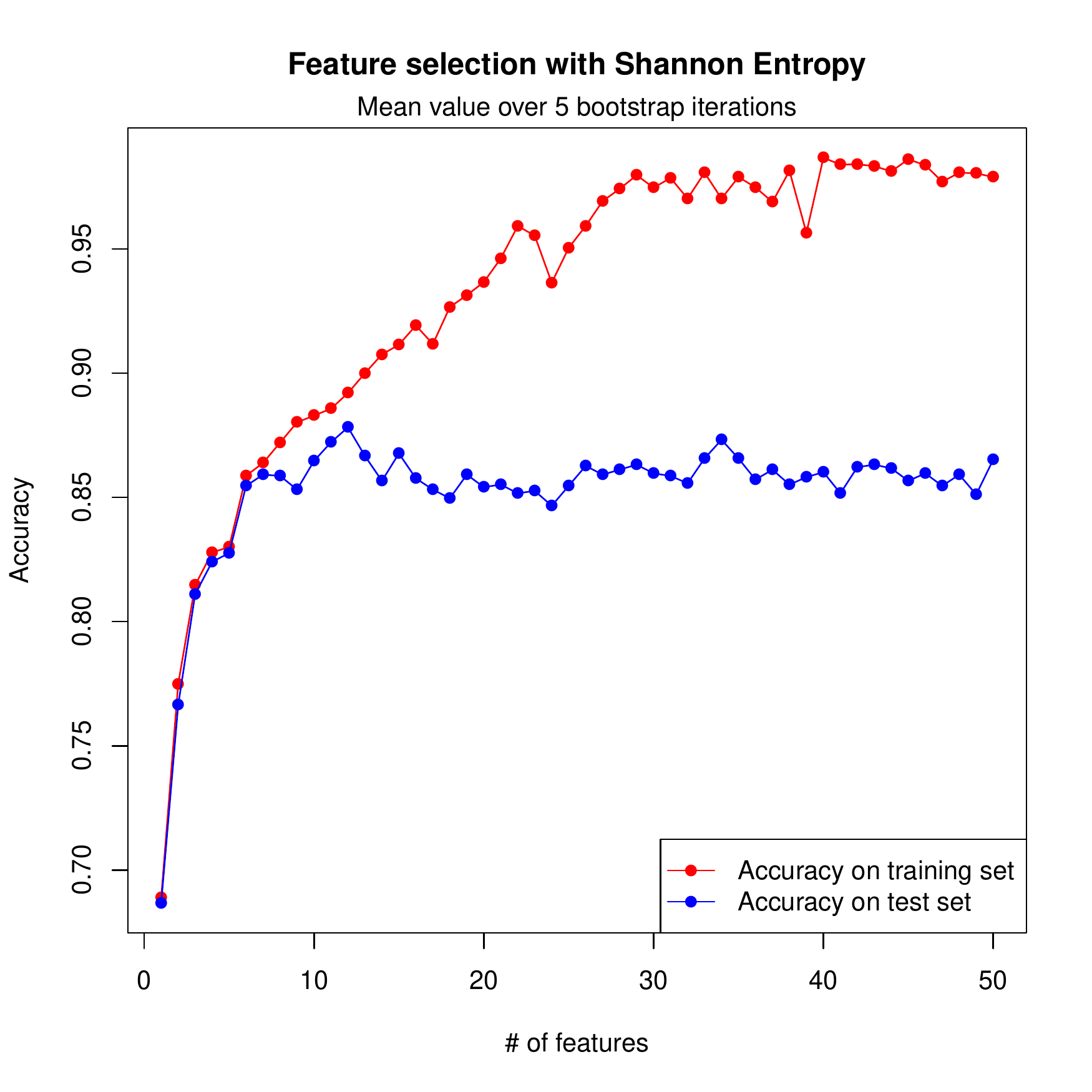}
\includegraphics[width=130pt, height=130pt,page=4]{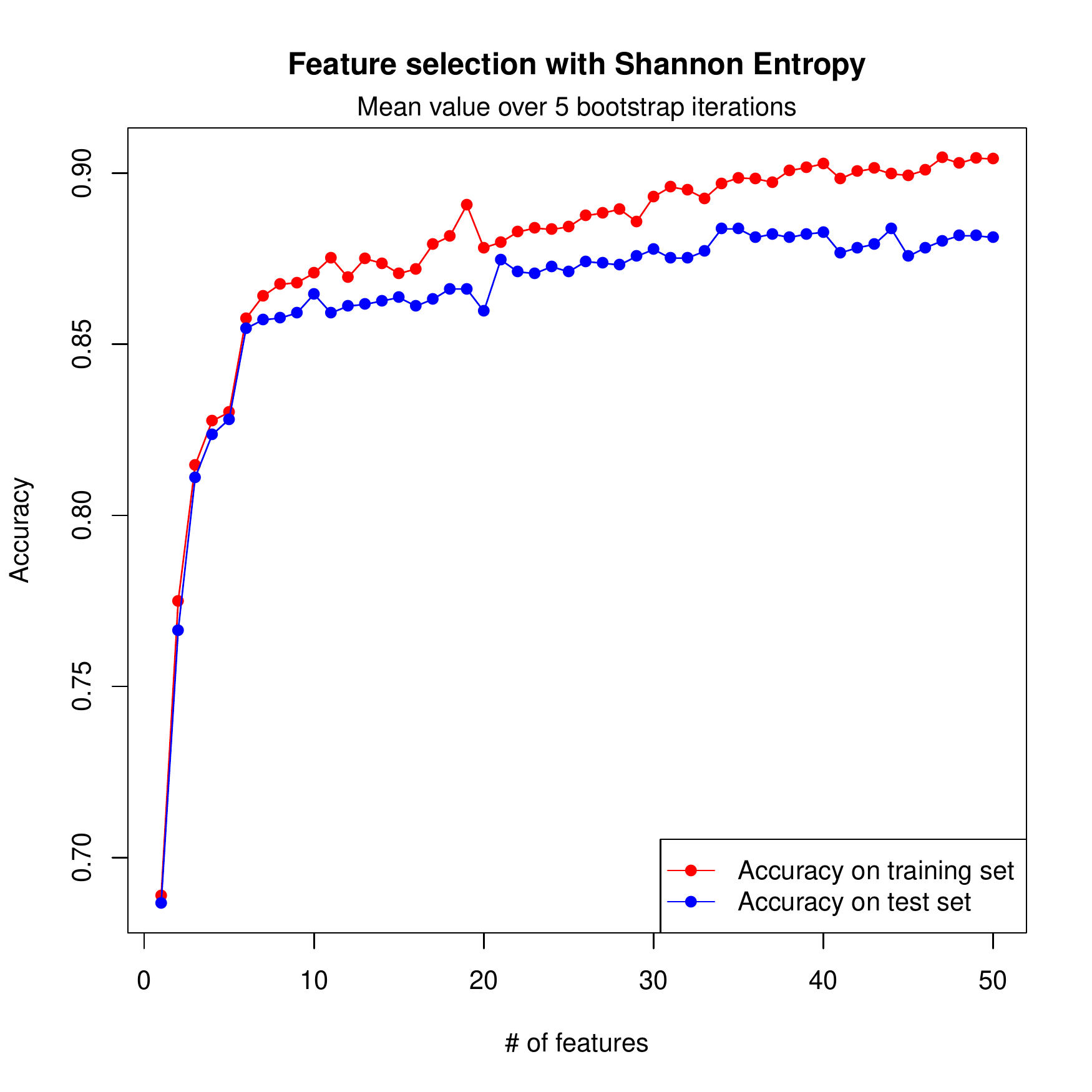}
\vspace{-3mm}
\caption{\small Accuracy of the ANN and SVM classifiers on the BASEHOCK dataset}\label{fig:bh}
\vspace{-3mm}
\end{figure}
\begin{figure}[t]
\centering
\includegraphics[width=130pt, height=130pt,page=4]{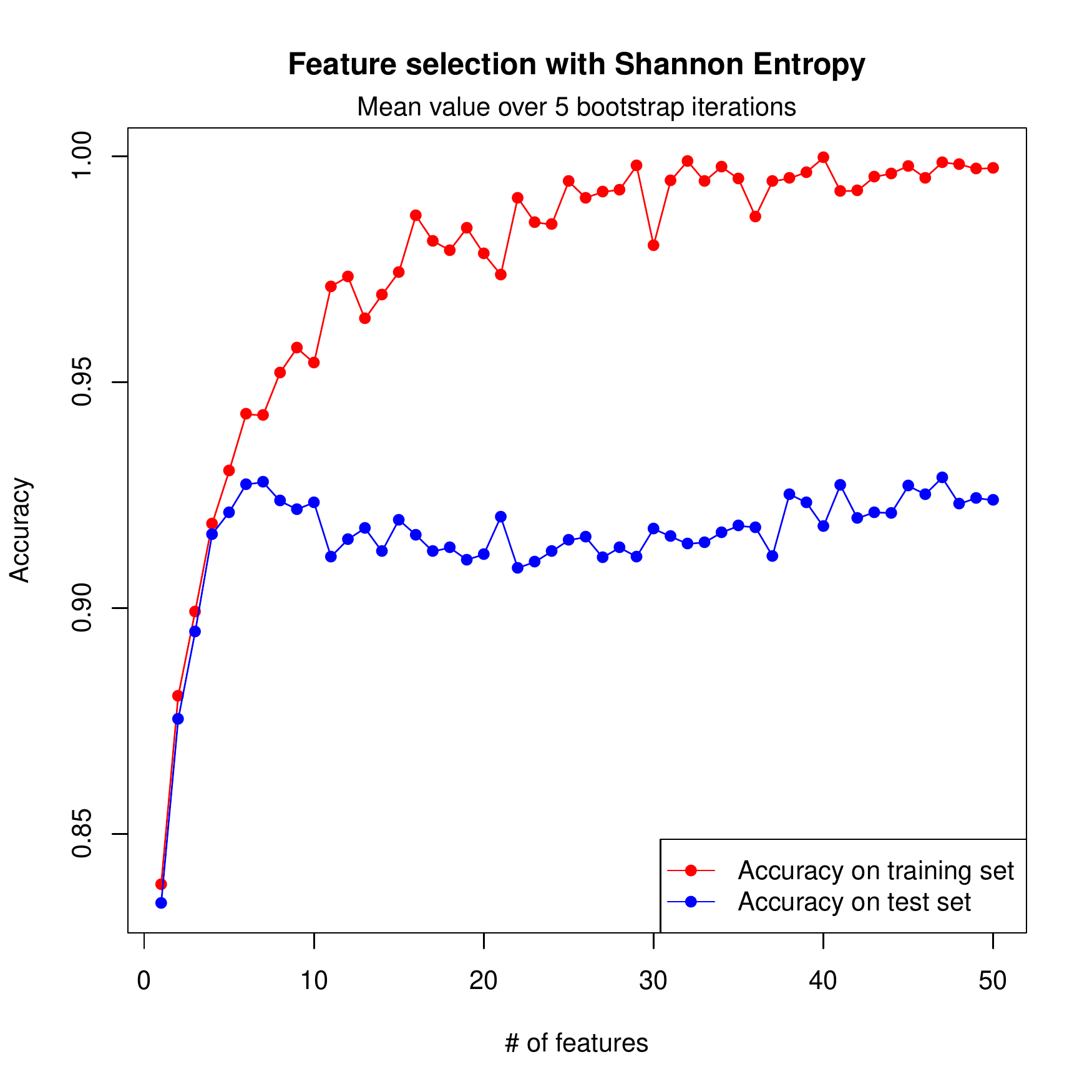}
\includegraphics[width=130pt, height=130pt,page=4]{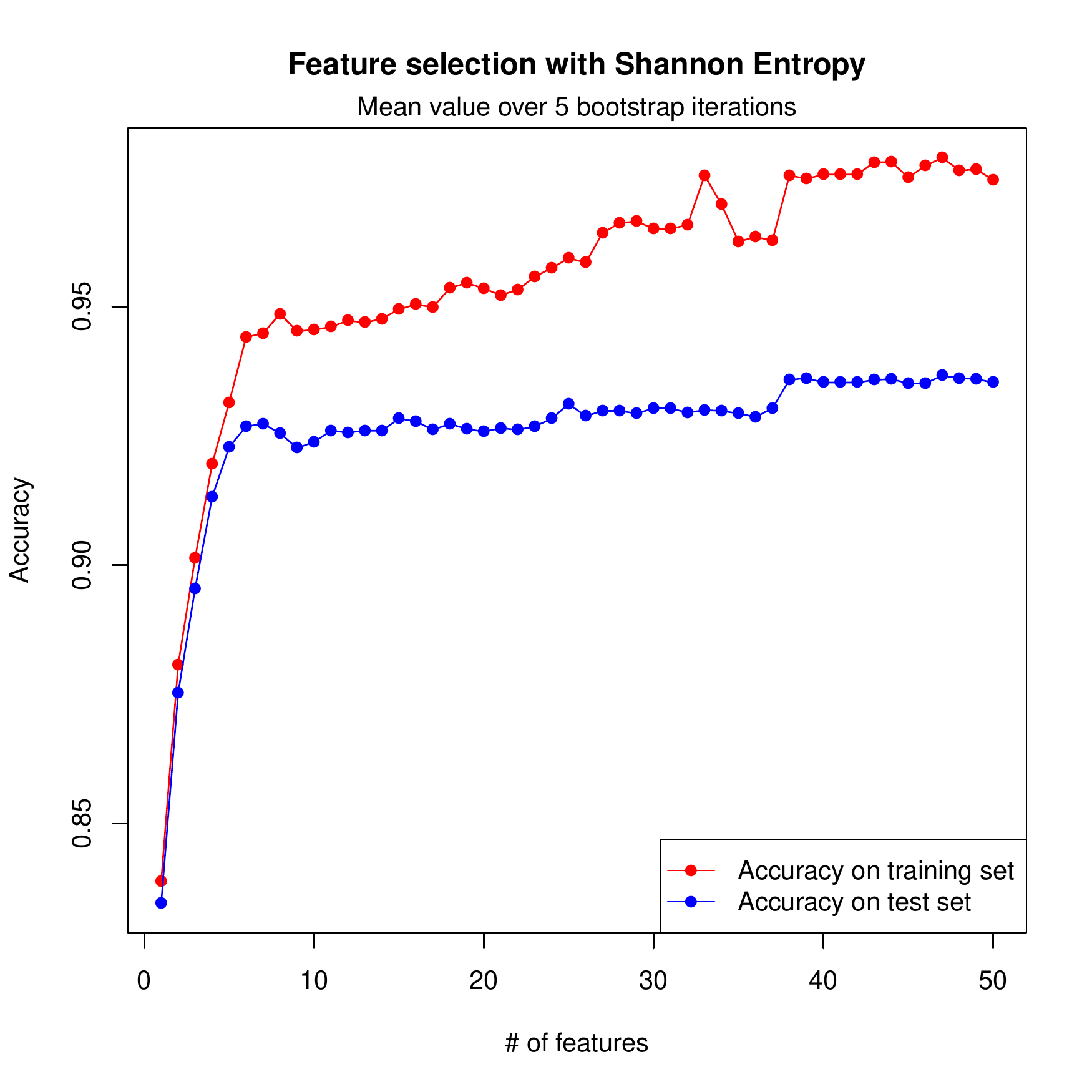}
\vspace{-3mm}
\caption{\small Accuracy of the ANN and SVM classifiers on the GISETTE dataset}\label{fig:gi}
\vspace{-3mm}
\end{figure}

We computed the average performances over the 5 iterations and  the results are in Figures~\ref{fig:bh}, \ref{fig:gi}, and \ref{fig:se}. In all cases the feature selection method using R\'enyi min-entropy usually gave better results than Shannon, especially with the  BASEHOCK dataset.

\begin{figure}[t]
\centering
\includegraphics[width=130pt, height=130pt,page=4]{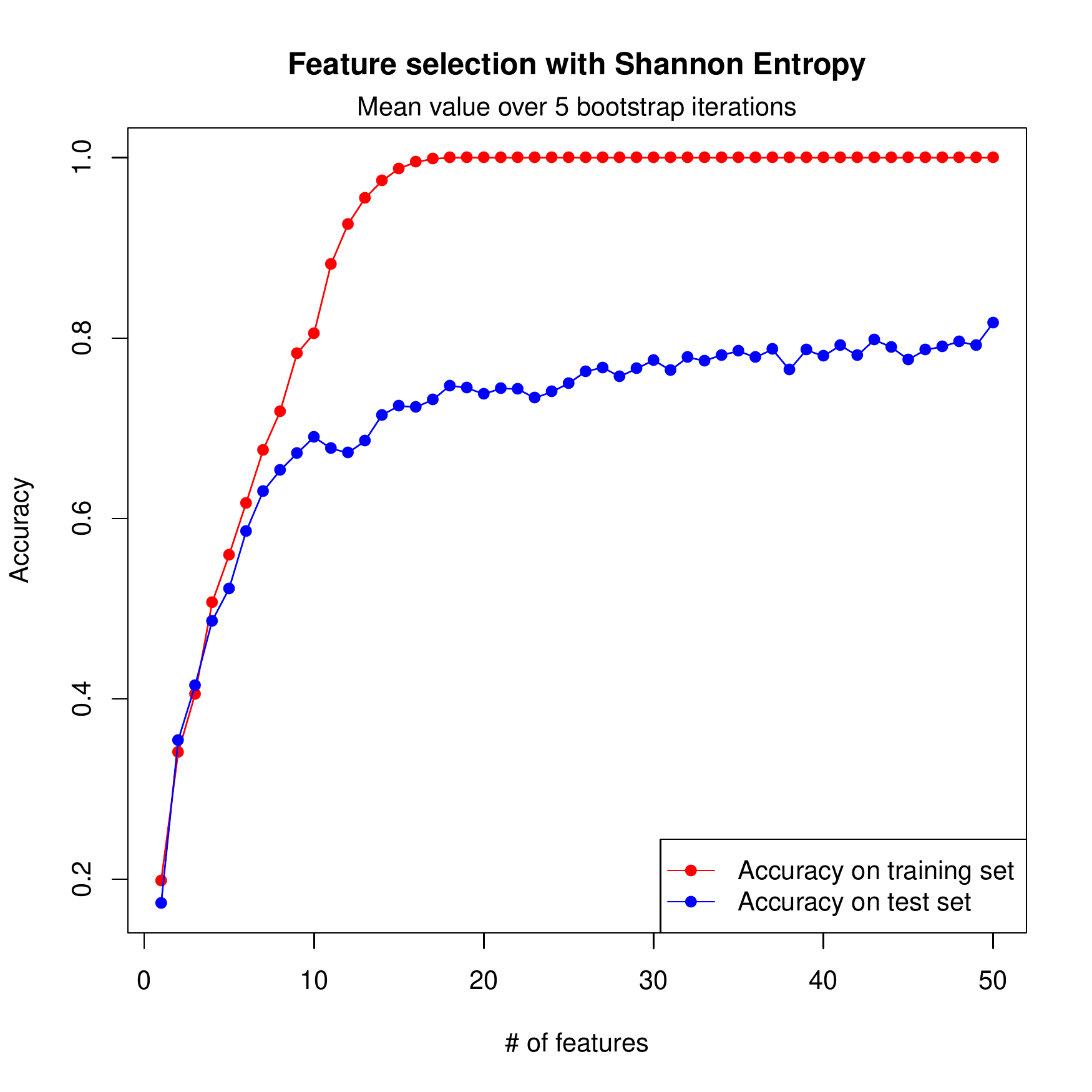}
\includegraphics[width=130pt, height=130pt,page=4]{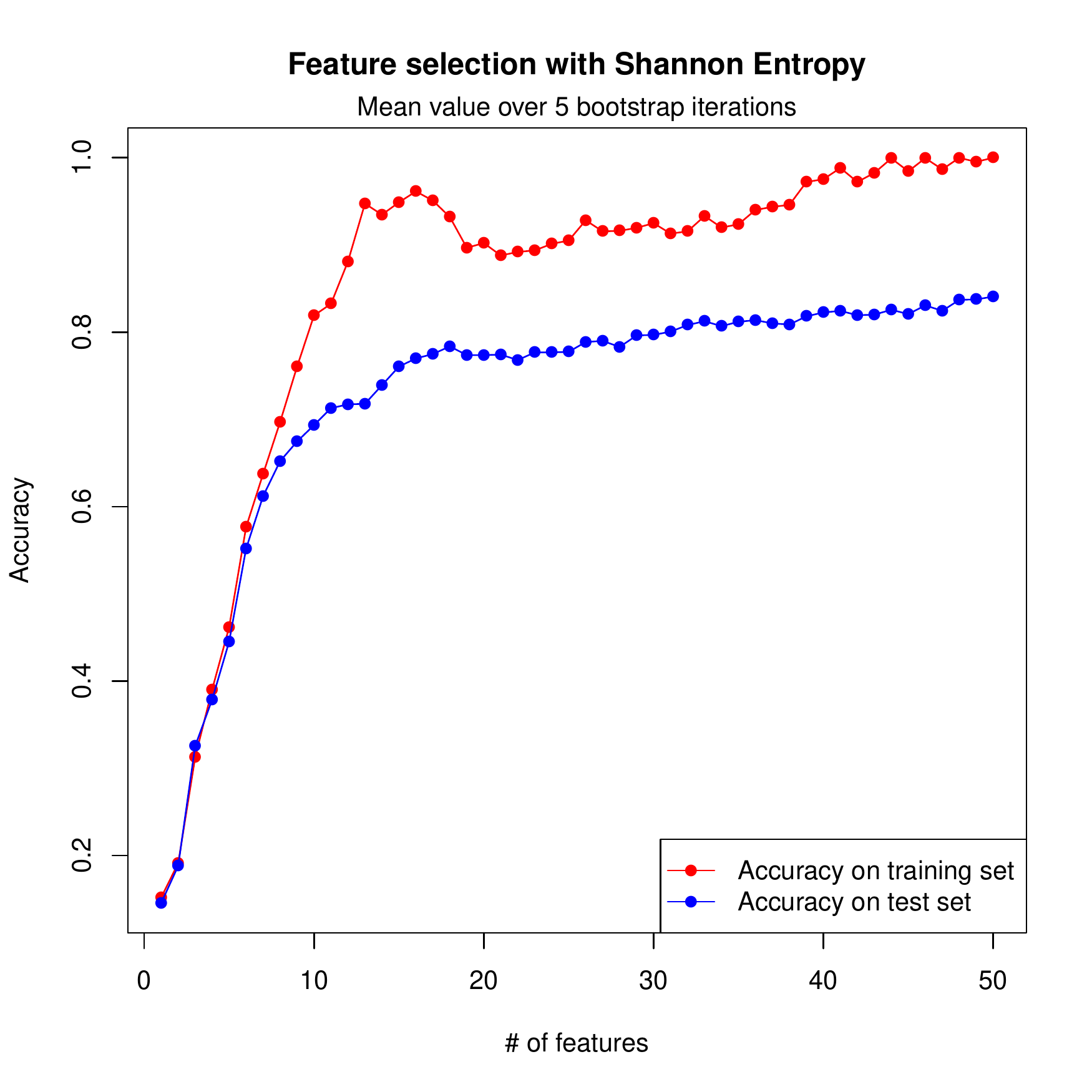}
\vspace{-3mm}
\caption{Accuracy of the ANN and SVM classifiers on the SEMEION dataset}\label{fig:se}
\vspace{-3mm}
\end{figure}

\section{Related works}\label{sec:related}
In the last two decades,  due to the growing interest for machine learning, much research effort has been devoted to the  feature reduction problem, and several 
methods have been proposed. 
In this section we discuss those closely related to our work, namely those which are based on information theory. 
For a more complete overview  we refer to \cite{Bennesar:15:ESA}, \cite{Vergara:14:NCA} and \cite{Brown:12:JMLR}.

The approach most related to our proposal is that of \cite{Brown:12:JMLR} and \cite{Vergara:14:NCA}, which differ from ours in that it uses Shannon entropy instead than R\'enyi min entropy. We have discussed  and compared   their method with ours in the technical body of this paper. 

As far as we know,  in the context of feature selection R\'enyi min-entropy has only been considered by  \cite{Endo:13:CIARP}
(although in \revision{the} experiments they only show results for other  R\'enyi entropies). 
The notion of conditional R\'enyi min-entropy they use, however, is that of \cite{Cachin:97:PhD}, which  formalizes it 
along the lines of conditional Shannon entropy. Namely, 
the conditional min-entropy of $X$ given $Y$ is defined  in  \cite{Cachin:97:PhD} as the expected value of the entropy of $X$ for each given value of $Y$. 
Such definition, however, violates the \emph{monotonicity principle}: knowing the value of  $Y$ may increase the entropy of  $X$ 
 instead of decreasing it. It is clear, therefore, that basing a method on this notion of entropy could lead to strange results.

 
 Two  key concepts that have been widely used are \emph{relevance} and \emph{redundancy}. 
Relevance refers to the importance   for the classification of the  feature under consideration at time $t$, $f^t$, and it is in general modeled as $I(C;f^t)$, where $I$ is the Shannon mutual information. 
Redundancy represents how much the information of $ f^t$ is already covered by $S$, and it is often modeled as $I(S;f\textsuperscript{t})$.
 In general, we want to maximize relevance and minimize redundancy. 
 
One of the first algorithms ever implemented was the MIFS algorithm proposed by  \cite{Battiti:94:TNN},  based on a greedy  strategy. At the first step   it selects 
$f\textsuperscript{1} = \argmax_{f_i \in F}I(C;f_i)$, and at  step $t$ it selects  
$ f\textsuperscript{t} = \argmax_{f_i \in F\setminus S\textsuperscript{t-1}}[I(C; f_i)-\beta \sum_{f_s \in S\textsuperscript{t-1}}I(f_s; f_i)]
$ where $\beta$ is a parameter that controls the weight of the redundancy part.

The mRMR approach (redundancy minimization and relevance maximization) proposed by \cite{Peng:05:TPAMI}  is based on the same strategy as MIFS. However the redundancy term is now substituted by its mean over the  elements of the subset $S$ so to avoid its value to grow when new attributes are selected.
 
A common issue with these two methods is that they do not take  into account the conditional mutual information $I(C;f\textsuperscript{t} \mid S)$ for the choice of the next feature to be selected. As a consequence,  it may happen that a feature $f\textsuperscript{t}$ has a high correlation with some other feature in the set $S$ of features already chosen, but,  if $I(C;f\textsuperscript{t})$ is high,   $f\textsuperscript{t}$ may still be  selected despite the fact that it is highly redundant.
 
More recent algorithms involve the ideas of joint mutual entropy $I(C;f_i,S)$ (JMI, \cite{Bennesar:15:ESA}) and conditional mutual entropy $I(C;f_i \mid S)$ (CMI, \cite{Fleuret:04:JMLR}).
The   step for choosing the next feature with JMI 
  is 
$f\textsuperscript{t}=\argmax_{f_i \in F\setminus S\textsuperscript{t-1}}\big\{min_{f_s \in S\textsuperscript{t-1}}I(C;f_i,f_s)\big\}$,
while with CMI is 
$f\textsuperscript{t}=\argmax_{f_i \in F\setminus  S\textsuperscript{t-1}} \big\{min_{f_s \in S\textsuperscript{t-1}}I(C;f_i \mid f_s)\big\}$.
In both cases the already selected features are taken into account one by one when compared to the new feature   $f\textsuperscript{t}$.
In  \cite{Yang:99:AIDA} the following correlation between JMI and CMI  was proved: 
\begin{equation*}
I(C;f_i,S) = H(C)-H(C \mid S)+H(C \mid S)-H(C \mid S) 
= I(C;S)+I(C;f_i \mid S).
\end{equation*}

\section{Conclusion and Future Work}\label{sec:conclusion}
We have proposed a method for feature selection based on a notion of  conditional R\'enyi min-entropy. 
 Although our method is  in general incomparable with  the corresponding one  based on Shannon entropy, 
in the experiments we  performed it turned out that our methods always achieved better results. 
 
 As future work, we plan to compare our proposal with other information-theoretic methods for feature selection. In particular, we plan to investigate the application of other notions of entropy which are the state-of-the-art  
 in  security and privacy, like the notion of $g$-vulnerability~\cite{Alvim:12:CSF}, which seems promising for its flexibility and capability to represent a large spectrum of possible classification strategies.

%
%
%
\bibliographystyle{splncs04}
\bibliography{main}
\end{document}